\documentclass[final]{opt2026} 

\usepackage{mathtools}
\usepackage{booktabs}
\usepackage{colortbl}
\definecolor{blockgray}{gray}{0.90}

\DeclareMathOperator*{\Var}{Var}
\DeclareMathOperator*{\ESS}{ESS}

\newcommand{\best}[1]{\textbf{#1}}
\newcommand{\snd}[1]{\underline{#1}}
\newcommand{\pms}[1]{{\scriptsize$\,\pm#1$}}
\newcommand{\sg}{\operatorname{sg}}

\providecommand{\proofname}{Proof}
\renewenvironment{proof}[1][\proofname]{%
  \par\smallskip\noindent{\bfseries #1\ }\normalfont}{%
  \hfill\jmlrBlackBox\par\smallskip}

\newtheorem{gdefn}{Definition}
\newtheorem{gprop}{Proposition}

\title[TETPO: Smooth Variance Control via Bridge IS]{Target--ESS Tempered Policy Optimization (TETPO):\\ Smooth Variance Control via Bridge Importance Sampling}

\optauthor{%
\Name{Shayan {Mohajer Hamidi}} \Email{smohajer@linkedin.com}\\
\Name{Kayhan Behdin} \Email{kbehdin@linkedin.com}\\
\Name{Wenhui Zhu} \Email{wenhzhu@linkedin.com}\\
\Name{Aida Rahmattalabi} \Email{arahmattalabi@linkedin.com}\\
\Name{Jincheng Cao} \Email{jincao@linkedin.com}\\
\Name{Jelena Markovic-Voronov} \Email{jmarkovicvoronov@linkedin.com}\\
\Name{Alborz Geramifard} \Email{agf@linkedin.com}\\
\addr LinkedIn}

\begin{document}

\maketitle

\begin{abstract}%
Asynchronous reinforcement-learning post-training of large language models is
computationally efficient but inherently off-policy: the learner updates on
rollouts generated by a stale policy, while correcting this mismatch requires
importance weights that can be extremely heavy-tailed. A small number of
trajectories may then dominate an update, collapsing the effective sample size
(ESS) and increasing the variance and instability of the resulting gradient
estimates. Existing stabilizers apply fixed interventions regardless of the
realized severity of weight concentration, either discarding selected samples or
reweighting them without adapting the correction strength. We introduce
\emph{Target--ESS Tempered Policy Optimization} (TETPO), which replaces each
importance weight $\rho$ by $\rho^\alpha$ for a data-adaptive exponent
$\alpha\in[0,1]$: $\alpha=1$ preserves the original importance weights, whereas
$\alpha=0$ assigns uniform coefficients. For each prompt group, TETPO selects the
largest $\alpha$ whose tempered weights satisfy a target ESS fraction $\kappa$,
applying little tempering to healthy groups and stronger tempering when weight
degeneracy is severe. At the population level, the tempered coefficients are exact
importance weights for a geometric bridge between the sampler and learner
policies. The exponent is efficiently found by one-dimensional bisection, while
the resulting weights satisfy deterministic bounds on normalized-weight
dispersion and weighted-gradient norm that are independent of policy drift,
together with a fixed-$\alpha$ bound on bridge-target discrepancy. Power
tempering also introduces no explicit zero-weight mask. On mathematical-reasoning
benchmarks under high staleness, TETPO outperforms strong on- and off-policy
baselines. 
\end{abstract}

\begin{keywords}%
  off-policy reinforcement learning, large language models, importance sampling,
  effective sample size, variance reduction
\end{keywords}

\section{Introduction and Motivation}
\label{sec:async}

Modern reinforcement-learning (RL) post-training of large language models (LLMs)
is dominated by the cost of \emph{rollout generation}: producing several long
completions $y \sim \mu(\cdot \mid x)$ per prompt $x$, scoring them, and taking
policy-gradient steps on the resulting $(x, y, R)$ triples. In strictly
synchronous schedules the learner sits idle during sampling. Asynchronous
schedules~\citep{noukhovitch2024arxiv,fu2025areal,wu2025llamarl,piche2025pipelinerl}
break this coupling: a sampler generates from a policy snapshot several gradient
steps in the past while the learner trains on rollouts as they arrive, giving
end-to-end speedups of $2$--$3\times$ over synchronous training~\citep{piche2025pipelinerl}.

The cost is that training becomes \emph{off-policy}. Let $\pi_\theta = \pi_{\theta_t}$
be the learner at step $t$ and $\mu = \pi_{\theta_{t'}}$ a sampler snapshot at
step $t' = t - k$. Assuming $\pi_\theta(\cdot\mid x)\ll\mu(\cdot\mid x)$ and
$\theta$-independent $R$, importance sampling (IS) yields the gradient identity
\begin{equation}
\nabla_\theta J(\theta)
=
\mathbb{E}_{x \sim \mathcal{D},\, y \sim \mu(\cdot \mid x)}
\!\Big[\, w(x,y)\, \big(R(x,y)-b(x)\big)\, \nabla_\theta \log \pi_\theta(y \mid x) \,\Big],
\quad
w(x,y) \triangleq \frac{\pi_\theta(y \mid x)}{\mu(y \mid x)},
\label{eq:offpolicy}
\end{equation}
for any $y$-independent baseline $b(x)$. The group-relative advantages used in the
practical surrogate are finite-sample variance-reduction terms; our results below
concern the tempered weights conditional on the realised advantages, not
unbiasedness of the full clipped update.
But $w(x,y) = \prod_{t=1}^{T} \pi_\theta(y_t \mid \cdot)/\mu(y_t \mid \cdot)$ is
a product of $T$ per-token ratios, with $T$ in the thousands. Even small
per-token shifts compound multiplicatively, so $w$ is extremely heavy-tailed:
the \emph{curse of the horizon}~\citep{liu2018curse,liu2020understanding}. In
practice a few trajectories dominate each update, and the \emph{effective sample
size}
\begin{equation}
\mathrm{ESS}(\mathbf w) \triangleq \frac{\big(\sum_{i=1}^B w_i\big)^2}{\sum_{i=1}^B w_i^2}
\in [1, B]
\label{eq:ess}
\end{equation}
collapses below the batch size $B$. When ESS collapses, weight concentration can
sharply increase the variance and instability of the resulting gradient estimates,
the learner-to-sampler KL spikes, and reward can collapse. The central question is
therefore:
\emph{how do we control the dispersion of heavy-tailed IS weights without losing
so much information that learning stalls?}

\paragraph{Existing stabilizers.}
Two recent methods take complementary approaches.
\emph{M2PO}~\citep{m2po2026} works per token: with ratio
$r_{i,t} = \pi_\theta(y_{i,t}\mid\cdot)/\mu(y_{i,t}\mid\cdot)$ and second-moment
proxy $M_{i,t}^{(2)} = (\log r_{i,t})^2$, it greedily zeroes the highest-$M^{(2)}$
tokens in a trust region until the mean $M^{(2)}$ falls below a threshold
$\tau_{M_2}$, yielding a masked surrogate
$\mathcal{L}_{\text{M2PO}} = -\tfrac{1}{N}\sum_{i,t} m_{i,t}\, r_{i,t}\, A_i$
with $m_{i,t}\in\{0,1\}$.
\emph{GEPO}~\citep{gepo} works at the sequence level and never masks: rather than
normalizing each likelihood ratio by the sum of the ratios in the batch, it
divides the learner sequence-likelihood by a \emph{group-expectation} denominator,
i.e.\ an estimate $\widehat{\mathbb E}_{\text{grp}}[\pi_\theta]$ of the expected
learner likelihood over the group, giving a group-expectation-normalized weight
$\hat{w}^{\text{GEPO}}_i = \pi_\theta(y_i) / \widehat{\mathbb E}_{\text{grp}}[\pi_\theta]$
(computed in log space for stability), so every sample contributes and nothing is
discarded.

\paragraph{Failure modes.}
Each buys stability at a cost that becomes acute as lag $k$ grows.
\textbf{(i) M2PO collapses under heterogeneous staleness.} The mask is
discontinuous in $r_{i,t}$; when prompts in a batch are sampled at different
lags, the $M^2$ statistic is dominated by high-lag tokens and the greedy mask
zeroes a large fraction of the trust region, sometimes entirely, so that under
mixed-staleness queues M2PO's trusted-token fraction becomes erratic and the run
can crash. In our $k=32$ runs
M2PO peaks early then \emph{declines} on every benchmark.
\textbf{(ii) GEPO has no variance-control knob.} Its group-expectation
normalization is fixed and has no tunable strength, so when a batch is dominated
by a few heavy sequences it passes through highly concentrated weights without an
explicit ESS constraint; at $k=32$ GEPO suffers a catastrophic reward drop near
step~$200$.

\paragraph{Our Approach.}
We want an estimator that \emph{(a)} \textbf{adapts} its correction strength to
the groupwise ESS: aggressive when ESS is collapsing, dormant when
healthy; \emph{(b)} is \textbf{smooth} in the IS ratios (no discrete masking);
\emph{(c)} provides \textbf{a bound on the fixed-$\alpha$ bridge-target
discrepancy}; and \emph{(d)}
\textbf{cannot collapse}, so weights stay strictly positive even under an
adversarial batch which motivate TETPO.

\section{Methodology: Target--ESS Tempered Policy Optimization}
\label{sec:tetpo}

We follow the off-policy LLM-RL setting of M2PO~\citep{m2po2026}. A minibatch
contains $M$ prompts $x_1,\dots,x_M$ with $G$ completions each; for prompt $x_m$
the completions $y_{m,1},\dots,y_{m,G}$ are drawn from the recorded behaviour
policy $\mu_m(\cdot)\coloneqq\mu(\cdot\mid x_m)$, and the sequence-level ratio is
\begin{equation}
\rho_{m,i} \coloneqq \frac{\pi_\theta(y_{m,i}\mid x_m)}{\mu_m(y_{m,i})}
= \exp\!\Big(\textstyle\sum_{t} \log\tfrac{\pi_\theta(y_{m,i,t}\mid x_m,y_{m,i,<t})}{\mu_m(y_{m,i,t}\mid x_m,y_{m,i,<t})}\Big).
\end{equation}
Because the bridge normaliser $Z_\alpha(x)$ (Theorem~\ref{thm:bridge}) is
prompt-dependent, we normalise and select the tempering exponent \emph{within each
prompt group}, not across the pooled minibatch (which would implicitly reweight the
prompt distribution).

\begin{gdefn}[Groupwise tempered weights and ESS]
\label{def:tempered}
For $\alpha \in [0,1]$ and each group $m$,
\begin{equation}
\bar w_{m,i}(\alpha) \coloneqq \rho_{m,i}^{\,\alpha},
\qquad
\tilde w_{m,i}(\alpha) \coloneqq \frac{\bar w_{m,i}(\alpha)}{\frac{1}{G}\sum_{j=1}^{G} \bar w_{m,j}(\alpha)},
\end{equation}
so $\frac{1}{G}\sum_i \tilde w_{m,i}(\alpha) = 1$, and the groupwise effective sample
size is
\[
\ESS_m(\alpha) \coloneqq \frac{(\sum_i \rho_{m,i}^{\alpha})^2}{\sum_i \rho_{m,i}^{2\alpha}} \in [1, G].
\]
\end{gdefn}

\begin{gdefn}[Groupwise target-ESS exponent]
\label{def:alpha-star}
Given a target ESS fraction $\kappa \in (0, 1]$,
\begin{equation}
\alpha_m^\star \coloneqq \max\big\{\, \alpha \in [0, 1] \;:\; \ESS_m(\alpha) \geq \kappa\, G \,\big\}.
\label{eq:alphastar}
\end{equation}
Since $\rho_{m,i}^{0}=1$ gives $\ESS_m(0)=G\ge\kappa G$, the feasible set is
non-empty; as $\alpha\mapsto\ESS_m(\alpha)$ is continuous and non-increasing
(Theorem~\ref{thm:alpha-exist}) the maximum exists with $\ESS_m(\alpha_m^\star)\ge\kappa G$
for \emph{every} group. We search all of $[0,1]$ (no floor $\alpha_{\min}$) by
$1$-D bisection.
\end{gdefn}

\paragraph{Stop-gradient surrogate.}
Let $\ell_{m,i}(\theta)\coloneqq\log\pi_\theta(y_{m,i}\mid x_m)$ and let $\sg[\cdot]$
be the stop-gradient operator. The exponent $\alpha_m^\star$ and weights
$\tilde w_{m,i}(\alpha_m^\star)$ are computed from detached ratios. Define the
straight-through surrogate ratio
\begin{equation}
r_{m,i}(\theta)\coloneqq
\sg\!\big[\tilde w_{m,i}(\alpha_m^\star)\big]\,
\exp\!\big(\ell_{m,i}(\theta)-\sg[\ell_{m,i}(\theta)]\big),
\end{equation}
whose forward value is $\tilde w_{m,i}(\alpha_m^\star)$ and whose gradient is
$\tilde w_{m,i}(\alpha_m^\star)\,\nabla_\theta\ell_{m,i}(\theta)$; at
$\alpha_m^\star=0$ the method reduces to a uniform-coefficient, GRPO-style score
update on the stale samples (not an on-policy update, since the samples still come
from $\mu_m$). With group-relative advantage $A_{m,i}$,
\begin{equation}
\mathcal{L}_{\text{TETPO}}(\theta)
=
-\,\frac{1}{MG}\sum_{m=1}^{M}\sum_{i=1}^{G}
\min\!\Big(
  r_{m,i}(\theta)\, A_{m,i},\;
  \mathrm{clip}\big(r_{m,i}(\theta),\, 1-\varepsilon_{\text{lo}},\, 1+\varepsilon_{\text{hi}}\big)\, A_{m,i}
\Big),
\label{eq:tetpo-loss}
\end{equation}
with $A_{m,i}$ and all ESS-selection quantities treated as constants w.r.t.\ the
learner gradient. This is a bridge-\emph{weighted} PPO surrogate, not the exact
gradient of the bridge objective: since $\tilde\pi_{\alpha,\theta}$ depends on
$\theta$, $\nabla_\theta\mathbb E_{\tilde\pi_\alpha}[R]=\alpha\,\mathbb E_{\tilde\pi_\alpha}[(R-\mathbb E_{\tilde\pi_\alpha}[R])\nabla_\theta\log\pi_\theta]$,
which \eqref{eq:tetpo-loss} does not reproduce.

\paragraph{Two regimes.}
Per group, \eqref{eq:tetpo-loss} interpolates between two ends. \emph{Healthy}
($\ESS_m(1) \geq \kappa G$): $\alpha_m^\star = 1$ and the group uses groupwise SNIS
coefficients inside the clipped surrogate. \emph{Degenerate}: the bisection returns
$\alpha_m^\star \in [0,1)$, tempering toward the uniform-weights limit
($\alpha_m^\star=0$), which is always feasible. No group ever produces a zero
tempering weight (Proposition~\ref{prop:nocollapse}), distinguishing TETPO from
M2PO's hard masking.

\section{Theoretical Guarantees}
\label{sec:theory}

TETPO's tempering enjoys four guarantees, stated and proved in
Appendix~\ref{app:proofs}; we summarise them here.

\paragraph{Exact bridge target (Thm.~\ref{thm:bridge}).}
Per prompt, the tempered coefficient $\rho^\alpha/Z_\alpha(x)$, with
$Z_\alpha(x)=\mathbb E_\mu[\rho^\alpha]$, is the exact importance ratio of a
geometric-bridge policy $\tilde\pi_\alpha\propto\pi_\theta^\alpha\mu^{1-\alpha}$
against $\mu$; $\alpha=1$ recovers $\pi_\theta$ and $\alpha=0$ recovers $\mu$.
Because $Z_\alpha(x)$ is prompt-dependent it cancels \emph{within} a group but not
across prompts, which is why normalisation is groupwise. This characterises the
population target, not the finite-sample estimator (which is biased; see below).

\paragraph{Well-posed selection (Thm.~\ref{thm:alpha-exist}).}
$\alpha\mapsto\ESS_m(\alpha)$ is continuous and non-increasing with
$\ESS_m(0)=G$, so for every $\kappa\in(0,1]$ the target-ESS exponent
$\alpha_m^\star$ exists (uniquely in the degenerate case) and is found by
bisection on $[0,1]$; there is no need for a floor $\alpha_{\min}$.

\paragraph{ESS-controlled dispersion (Thm.~\ref{thm:variance}).}
The self-normalised weights obey
$\frac1G\sum_i(\tilde w_{m,i}(\alpha_m^\star)-1)^2=G/\ESS_m(\alpha_m^\star)-1\le 1/\kappa-1$,
\emph{independent} of the $\pi_\theta$-to-$\mu$ divergence, giving a
weighted-gradient-norm bound
$\|\hat g_m\|^2\le\frac1\kappa\cdot\frac1G\sum_i\|g_{m,i}\|^2$. This is a
deterministic realised-batch bound, not a bound on the estimator's stochastic
variance; in contrast, ordinary IS has
$\Var_\mu[\rho]=e^{D_2(\pi_\theta\|\mu)}-1$, which grows with the drift
($D_2$ the order-two R\'enyi divergence).

\paragraph{Bounded discrepancy (Thm.~\ref{thm:bias}).}
For fixed $\alpha$ and $|R|\le R_{\max}$, the bridge objective differs from the
on-policy objective by at most $R_{\max}\sqrt{\chi^2(\tilde\pi_\alpha\|\pi_\theta)}$.
As $\chi^2(\tilde\pi_\alpha\|\pi_\theta)$ is non-increasing in $\alpha$
(Prop.~\ref{prop:disc-mono}) while dispersion is non-decreasing, $\alpha_m^\star$
is the least-tempered exponent meeting the ESS target, i.e.\ it minimises this
$\chi^2$ bound among feasible exponents. The finite-sample and adaptation bias of
the self-normalised estimator is not characterised.

\paragraph{No masking (Prop.~\ref{prop:nocollapse}).}
All tempered weights are strictly positive, so tempering adds no explicit
zero-weight mask, unlike M2PO, whose greedy mask can zero an entire trust region
(the instability observed at high staleness). Positivity alone does not force a
nonzero final PPO gradient when the advantage is zero or the clip is active.

\paragraph{Complexity.}
Beyond a vanilla GRPO step, TETPO adds only $O(n_{\text{iter}}\cdot MG)$ work: the
$\log\rho_{m,i}$ are already available, and the $24$-iteration per-group bisections
plus $O(G)$ self-normalisation are negligible next to the per-token
forward/backward passes. Table~\ref{tab:compare} (Appendix~\ref{app:proofs})
contrasts TETPO with prior stabilizers.

\section{Experiments}
\label{sec:exp}

We evaluate TETPO against GRPO~\citep{grpo}, GSPO~\citep{gspo}, GEPO~\citep{gepo},
and M2PO~\citep{m2po2026} on math-reasoning benchmarks under high staleness
($k=32$). Table~\ref{tab:performance-compare} reports avg@16 accuracy (\%): TETPO
attains the best average on both models, improving over the strongest baseline by
$+2.1$ points on Qwen3-Base-1.7B and $+2.3$ on Llama-3.2-3B-Instruct, and,
consistent with the no-collapse guarantee, avoids the
mid-training reward drop seen for GEPO and M2PO. Figure~\ref{fig:curves} shows the
Math~500 trajectories at $k=32$: GEPO collapses sharply (steps $200$, $450$) as
low-ESS batches pass through highly concentrated weights, M2PO peaks then declines
as its mask grows aggressive, and the on-policy baselines GRPO/GSPO plateau lower,
whereas TETPO rises smoothly to the highest reward. Full details are in
Appendix~\ref{app:repro}.

\begin{figure}[t]
\centering
\includegraphics[width=0.72\textwidth]{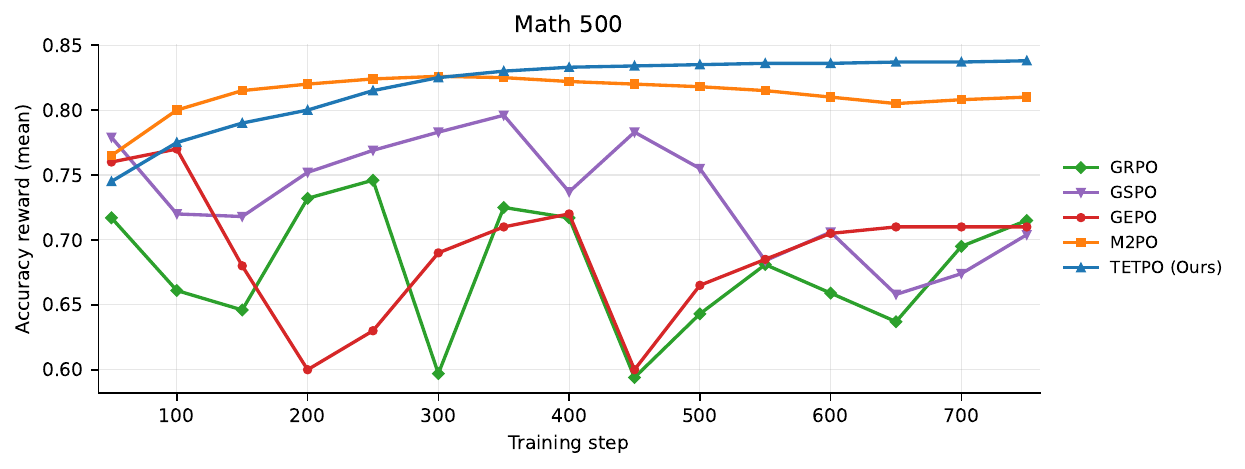}
\vskip -20pt
\caption{Math~500 avg@16 evaluation accuracy versus training step at staleness $k=32$,
for GRPO, GSPO, GEPO, M2PO, and TETPO (ours). GEPO collapses intermittently and
M2PO peaks then declines, whereas TETPO rises smoothly and remains stable.}
\label{fig:curves}
\end{figure}

\begin{table}[t]
\centering
\setlength{\tabcolsep}{3pt}
\caption{Accuracy (\%) across mathematical-reasoning benchmarks, comparing GRPO,
GSPO, GEPO, M2PO, and TETPO. Values are avg@16 accuracy under stochastic
sampling, reported as mean\,$\pm$\,std over
three seeds. Within each model block, the \best{best} result per column is in
bold and the \snd{second best} is underlined.}
\label{tab:performance-compare}
\small
\vskip -8pt
\begin{tabular}{lcccccc}
\toprule
\textbf{Method} & \textbf{AIME24} & \textbf{AIME25} & \textbf{AMC23} & \textbf{AMC24} & \textbf{Math500} & \textbf{Avg.} \\
\midrule
\rowcolor{blockgray}\multicolumn{7}{c}{\textit{Qwen3-Base-1.7B}} \\
\midrule
GRPO~\citep{grpo}     & 15.6\pms{1.1} & 17.5\pms{1.4} & 42.1\pms{0.8} & 21.2\pms{1.0} & 74.6\pms{0.5} & 34.2\pms{0.6} \\
GSPO~\citep{gspo}     & 23.0\pms{0.9} & 18.7\pms{1.2} & 48.0\pms{0.7} & 25.6\pms{0.9} & 79.6\pms{0.4} & 39.0\pms{0.5} \\
GEPO~\citep{gepo}     & 19.5\pms{1.6} & 17.9\pms{1.7} & 43.4\pms{1.2} & 23.1\pms{1.3} & 77.1\pms{0.8} & 36.2\pms{0.9} \\
M2PO~\citep{m2po2026} & \snd{24.1}\pms{0.7} & \snd{24.2}\pms{1.0} & \snd{53.1}\pms{0.6} & \snd{27.5}\pms{0.8} & \snd{82.5}\pms{0.4} & \snd{42.3}\pms{0.5} \\
TETPO (Ours)          & \best{26.6}\pms{0.6} & \best{24.6}\pms{0.8} & \best{54.3}\pms{0.5} & \best{32.8}\pms{0.7} & \best{83.8}\pms{0.3} & \best{44.4}\pms{0.4} \\
\midrule
\rowcolor{blockgray}\multicolumn{7}{c}{\textit{Llama-3.2-3B-Instruct}} \\
\midrule
GRPO~\citep{grpo}     & 1.5\pms{0.7} & 0.0\pms{0.0} & 15.5\pms{1.1} & 5.8\pms{0.9} & 40.2\pms{0.6} & 12.6\pms{0.5} \\
GSPO~\citep{gspo}     & 5.4\pms{0.8} & \snd{0.6}\pms{0.5} & 20.3\pms{0.9} & 9.8\pms{1.0} & 44.6\pms{0.5} & 16.1\pms{0.4} \\
GEPO~\citep{gepo}     & 2.8\pms{0.9} & 0.0\pms{0.0} & \best{22.1}\pms{1.3} & \snd{10.2}\pms{1.1} & \snd{45.9}\pms{0.7} & \snd{16.2}\pms{0.6} \\
M2PO~\citep{m2po2026} & \snd{6.6}\pms{0.6} & 0.3\pms{0.4} & 19.1\pms{0.8} & 8.7\pms{0.7} & 43.5\pms{0.5} & 15.6\pms{0.4} \\
TETPO (Ours)          & \best{8.2}\pms{0.7} & \best{1.6}\pms{0.6} & \snd{21.4}\pms{0.6} & \best{14.9}\pms{0.9} & \best{46.2}\pms{0.4} & \best{18.5}\pms{0.3} \\
\bottomrule
\end{tabular}
\end{table}

\section{Limitations}

Tempering rescales all group weights uniformly; Theorem~\ref{thm:bias} bounds the
fixed-$\alpha$ target discrepancy, but the finite-sample and adaptation bias of the
estimator is not yet characterised. TETPO also uses sequence-level ratios;
combining it with a per-token mask is future work.

\bibliography{ref}

\newpage
\appendix

\section{Theoretical results and proofs}
\label{app:proofs}

\begin{table}[h]
\centering
\small
\caption{TETPO is the only method that (i) operates smoothly, (ii) achieves a
divergence-free (ESS-controlled) weight-dispersion bound, and (iii) provably adds
no explicit zero-weight mask. Second-moment entries use the order-two R\'enyi
divergence $D_2$ ($\chi^2=e^{D_2}-1$).}
\label{tab:compare}
\vskip -6pt
\begin{tabular}{lcccc}
\toprule
\textbf{Method} & \textbf{Granularity} & \textbf{Smoothness} & \textbf{Var./disp.\ bound} & \textbf{Mask-collapse risk} \\
\midrule
GSPO~\citep{gspo} & sequence & PPO clip & $e^{D_2}$ & low \\
GEPO~\citep{gepo} & group & group-exp. & $e^{D_2} - C$ & low \\
M2PO~\citep{m2po2026} & token & hard mask & $R^2 M_2$ & \emph{high} \\
SNIS+ESS & sequence & hard truncate & $1/\beta - 1$ & low \\
\textbf{TETPO (ours)} & sequence & \textbf{smooth} & $\boxed{1/\kappa - 1}$ \emph{(disp.)} & \textbf{none (Prop.~\ref{prop:nocollapse})} \\
\bottomrule
\end{tabular}
\end{table}

\begin{theorem}[Geometric-bridge importance identity]
\label{thm:bridge}
Fix a prompt $x$ with $\pi_\theta(\cdot\mid x),\mu(\cdot\mid x)>0$ on a common
support $\mathcal Y_x$. For $\alpha\in[0,1]$ let
$Z_\alpha(x)\coloneqq\mathbb E_{y\sim\mu(\cdot\mid x)}[\rho(y\mid x)^\alpha]$ and define
the geometric-bridge policy
$\tilde\pi_\alpha(y\mid x)\coloneqq \pi_\theta(y\mid x)^\alpha\,\mu(y\mid x)^{1-\alpha}/Z_\alpha(x)$.
Then $\tilde\pi_\alpha(y\mid x)/\mu(y\mid x)=\rho(y\mid x)^\alpha/Z_\alpha(x)$, so for
every integrable $h$,
\begin{equation}
\mathbb E_{y\sim\tilde\pi_\alpha(\cdot\mid x)}[h(y)]
=\frac{\mathbb E_{y\sim\mu(\cdot\mid x)}[\rho(y\mid x)^\alpha\,h(y)]}
       {\mathbb E_{y\sim\mu(\cdot\mid x)}[\rho(y\mid x)^\alpha]},
\end{equation}
with endpoints $\tilde\pi_0=\mu$ and $\tilde\pi_1=\pi_\theta$.
\end{theorem}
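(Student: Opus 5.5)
The identity is a direct algebraic computation, so I will argue pointwise on the common support $\mathcal Y_x$ and then integrate. The one point needing care is that the normaliser $Z_\alpha(x)$ is finite and strictly positive, so that $\tilde\pi_\alpha$ is a genuine probability distribution. I would settle this first.

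\emph{Well-definedness of $Z_\alpha$.} Positivity is immediate: $\rho>0$ on $\mathcal Y_x$, so $\rho^\alpha>0$ there, and $\mu$ has positive mass on $\mathcal Y_x$. Hence $Z_\alpha(x)>0$. For finiteness, I would use concavity of $t\mapsto t^\alpha$ on $(0,\infty)$ for $\alpha\in[0,1]$. Jensen's inequality then gives
\[
Z_\alpha(x)=\mathbb E_\mu[\rho^\alpha]\le(\mathbb E_\mu[\rho])^\alpha=\Big(\sum_{y\in\mathcal Y_x}\pi_\theta(y\mid x)\Big)^\alpha=1 .
\]
This uses that $\pi_\theta(\cdot\mid x)$ is supported on $\mathcal Y_x$. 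An alternative to Jensen is the pointwise bound $\pi^\alpha\mu^{1-\alpha}\le\alpha\pi+(1-\alpha)\mu$ from weighted AM--GM. So $0<Z_\alpha(x)\le 1$. This is the only step with real content, and it is also what justifies calling $\tilde\pi_\alpha$ a policy.

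\emph{Ratio identity.} On $\mathcal Y_x$ I would simply divide:
\[
\frac{\tilde\pi_\alpha(y\mid x)}{\mu(y\mid x)}=\frac{\pi_\theta(y\mid x)^\alpha\,\mu(y\mid x)^{-\alpha}}{Z_\alpha(x)}=\frac{\rho(y\mid x)^\alpha}{Z_\alpha(x)} .
\]
Normalisation follows at once: $\sum_y\tilde\pi_\alpha(y\mid x)=\mathbb E_\mu[\rho^\alpha]/Z_\alpha(x)=1$.

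\emph{Expectation identity.} For integrable $h$ under $\tilde\pi_\alpha$, I would change measure:
\[
\mathbb E_{\tilde\pi_\alpha}[h]=\sum_y\mu(y\mid x)\,\frac{\rho^\alpha(y\mid x)}{Z_\alpha(x)}\,h(y).
\]
I would then substitute $Z_\alpha(x)=\mathbb E_\mu[\rho^\alpha]$. Integrability of $h$ under $\tilde\pi_\alpha$ is equivalent to integrability of $\rho^\alpha h$ under $\mu$, by the same identity applied to $|h|$, so both sides are finite simultaneously.

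\emph{Endpoints.} At $\alpha=0$ we have $\rho^0=1$, so $Z_0=1$ and $\tilde\pi_0=\mu$. At $\alpha=1$ we have $Z_1=\mathbb E_\mu[\rho]=1$ by the support assumption, so $\tilde\pi_1=\pi_\theta$. I expect no real obstacle beyond the finiteness of $Z_\alpha$ handled above.
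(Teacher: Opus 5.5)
Your proposal is correct and follows the same route as the paper's proof. Like the paper, you divide pointwise to get $\tilde\pi_\alpha/\mu=\rho^\alpha/Z_\alpha(x)$, change measure under $\mu$, and read off the endpoints. Your additions are the explicit check that $0<Z_\alpha(x)\le 1$ (via Jensen or weighted AM--GM) and the direct computation $Z_1=\mathbb E_\mu[\rho]=1$; the paper leaves these implicit when it simply appeals to $\tilde\pi_0\propto\mu$ and $\tilde\pi_1\propto\pi_\theta$.
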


\begin{proof}[Proof of Theorem~\ref{thm:bridge}]
By the definition of $\tilde\pi_\alpha$,
\begin{equation}
\frac{\tilde\pi_\alpha(y\mid x)}{\mu(y\mid x)}
= \frac{\pi_\theta(y\mid x)^\alpha\,\mu(y\mid x)^{1-\alpha}}{Z_\alpha(x)\,\mu(y\mid x)}
= \frac{1}{Z_\alpha(x)}\Big(\frac{\pi_\theta(y\mid x)}{\mu(y\mid x)}\Big)^{\!\alpha}
= \frac{\rho(y\mid x)^\alpha}{Z_\alpha(x)} .
\end{equation}
Hence, for any integrable $h$,
\begin{equation}
\mathbb E_{y\sim\tilde\pi_\alpha(\cdot\mid x)}[h(y)]
= \sum_{y\in\mathcal Y_x}\tilde\pi_\alpha(y\mid x)\,h(y)
= \sum_{y\in\mathcal Y_x}\mu(y\mid x)\,\frac{\rho(y\mid x)^\alpha}{Z_\alpha(x)}\,h(y)
= \frac{\mathbb E_{y\sim\mu(\cdot\mid x)}[\rho(y\mid x)^\alpha\,h(y)]}{Z_\alpha(x)} .
\end{equation}
Taking $h\equiv 1$ gives $Z_\alpha(x)=\mathbb E_{\mu}[\rho^\alpha]$, so
\begin{equation}
\mathbb E_{\tilde\pi_\alpha}[h]
= \frac{\mathbb E_{\mu}[\rho^\alpha\,h]}{\mathbb E_{\mu}[\rho^\alpha]},
\end{equation}
which is the stated ratio-of-expectations identity. The endpoints follow from
$\tilde\pi_0\propto\mu$ and $\tilde\pi_1\propto\pi_\theta$. Finally, $Z_\alpha(x)$
depends on $x$, so it cancels between numerator and denominator only within a
fixed prompt and not under pooled cross-prompt normalisation.
\end{proof}

\begin{theorem}[Existence, monotonicity, uniqueness, bisection]
\label{thm:alpha-exist}
Fix a group with $\rho_1,\dots,\rho_G>0$ and set $f(\alpha)\coloneqq\ESS_m(\alpha)$
on $[0,1]$. Then $f$ is continuous, $f(0)=G$, $f(1)=\ESS(\rho)$, and $f$ is
non-increasing (strictly decreasing on $(0,1]$ when the $\rho_i$ are not all equal).
Hence for every $\kappa\in(0,1]$ the feasible set $\{\alpha:f(\alpha)\ge\kappa G\}$
is a non-empty closed interval $[0,\alpha_m^\star]$: if $f(1)\ge\kappa G$ then
$\alpha_m^\star=1$; if $\kappa<1$ and $f(1)<\kappa G$ then $\alpha_m^\star\in(0,1)$
is the unique root of $f(\alpha)=\kappa G$; and if $\kappa=1$ with non-constant
ratios then $\alpha_m^\star=0$. Bisection on $[0,1]$ brackets $\alpha_m^\star$ to
width $2^{-n_{\text{iter}}}$.
\end{theorem}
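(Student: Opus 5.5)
The plan is to work with the log-ratios $\ell_i\coloneqq\log\rho_i\in\mathbb R$ and write
\[
\log f(\alpha)=2\log\textstyle\sum_i e^{\alpha\ell_i}-\log\sum_i e^{2\alpha\ell_i}=2\Lambda(\alpha)-\Lambda(2\alpha),\qquad \Lambda(s)\coloneqq\log\sum_{i=1}^G e^{s\ell_i}.
\]
Continuity is immediate because all $\rho_i>0$, so $f$ is a ratio of finite sums of exponentials with a strictly positive denominator. The endpoint values follow directly from the definition: $f(0)=G^2/G=G$, and $f(1)$ is the ESS of the raw ratios.

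For monotonicity I would differentiate. We have $\Lambda'(s)=\mathbb E_{p_s}[\ell]$, where $p_s(i)\propto e^{s\ell_i}$ is the tilted distribution on $\{1,\dots,G\}$. This gives
\[
\frac{d}{d\alpha}\log f(\alpha)=2\Lambda'(\alpha)-2\Lambda'(2\alpha)=-2\int_\alpha^{2\alpha}\Lambda''(s)\,ds,
\]
with $\Lambda''(s)=\Var_{p_s}[\ell]\ge 0$. Hence $f$ is non-increasing on $[0,1]$. If the $\rho_i$ are not all equal, then $\ell$ is non-constant and every $p_s$ has full support, so $\Var_{p_s}[\ell]>0$ for all $s$. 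The integral is then strictly positive for every $\alpha>0$, which gives strict decrease on $(0,1]$, and in fact on $[0,1]$, since the derivative is negative on $(0,1]$ and $f$ is continuous at $0$. This derivative computation is the one real step; the rest is bookkeeping. The main care point is justifying the strictness, and full support of $p_s$ handles it.

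Next comes the structure of the feasible set. Monotonicity and continuity make $\{\alpha\in[0,1]:f(\alpha)\ge\kappa G\}$ a closed sub-interval containing $0$, because $f(0)=G\ge\kappa G$. It therefore equals $[0,\alpha_m^\star]$, which also justifies the maximum in Definition~\ref{def:alpha-star}. I would then split into cases.
\begin{itemize}
\item If $f(1)\ge\kappa G$, then $\alpha_m^\star=1$.
\item If $\kappa<1$ and $f(1)<\kappa G$, the ratios must be non-constant, since otherwise $f\equiv G$. The intermediate value theorem applied to $f(0)=G>\kappa G>f(1)$ gives a root, and strict monotonicity makes it unique and equal to $\alpha_m^\star\in(0,1)$.
\item If $\kappa=1$ with non-constant ratios, strict decrease gives $f(\alpha)<G$ for all $\alpha>0$, so $\alpha_m^\star=0$.
\end{itemize}

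Finally, for bisection I maintain the invariant that $lo$ is feasible and $hi$ is either infeasible or equal to $1$. I start from $[lo,hi]=[0,1]$ after first checking $f(1)$. At each step I evaluate $f$ at the midpoint and replace $lo$ or $hi$ according to whether $f(\text{mid})\ge\kappa G$. Because the feasible set is exactly $[0,\alpha_m^\star]$, the invariant $\alpha_m^\star\in[lo,hi]$ is preserved, and the interval width halves at each step. After $n_{\text{iter}}$ steps the width is therefore $2^{-n_{\text{iter}}}$.
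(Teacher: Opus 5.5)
Your proposal is correct and follows essentially the same route as the paper. Both arguments write $\log f(\alpha)=2\log S(\alpha)-\log S(2\alpha)$ with $S(a)=\sum_i e^{a\log\rho_i}$, express the derivative through the exponentially tilted law as $2(\mathbb E_{p_\alpha}[\log\rho]-\mathbb E_{p_{2\alpha}}[\log\rho])$, and use $\frac{d}{ds}\mathbb E_{p_s}[\log\rho]=\Var_{p_s}(\log\rho)\ge 0$ (your form $-2\int_\alpha^{2\alpha}\Lambda''(s)\,ds$ is the same fact), followed by the same case analysis; your explicit bisection invariant ($\mathrm{lo}$ feasible, $\mathrm{hi}$ infeasible or equal to $1$) and the remark that strict decrease holds on all of $[0,1]$ are small refinements of points the paper states only briefly.
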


\begin{proof}[Proof of Theorem~\ref{thm:alpha-exist}]
Write $z_i \coloneqq \log\rho_i$ and
\begin{equation}
S(a)\coloneqq\sum_{i=1}^{G} e^{a z_i},
\qquad
f(a)=\frac{S(a)^2}{S(2a)} .
\end{equation}
Then $f$ is continuous on $[0,1]$, and
\begin{equation}
f(0)=\frac{G^2}{G}=G,
\qquad
f(1)=\frac{(\sum_i\rho_i)^2}{\sum_i\rho_i^2}=\ESS(\rho) .
\end{equation}
Introduce the exponentially tilted law $p_a(i)\coloneqq e^{a z_i}/S(a)$. Since
$\frac{d}{da}\log S(a)=\mathbb E_{p_a}[z]$,
\begin{equation}
\frac{d}{da}\log f(a)
= 2\,\frac{d}{da}\log S(a)-\frac{d}{da}\log S(2a)
= 2\big(\mathbb E_{p_a}[z]-\mathbb E_{p_{2a}}[z]\big),
\end{equation}
and, moreover,
\begin{equation}
\frac{d}{da}\,\mathbb E_{p_a}[z]=\Var_{p_a}(z)\ge 0 .
\end{equation}
Thus $a\mapsto\mathbb E_{p_a}[z]$ is non-decreasing, so $\frac{d}{da}\log f(a)\le 0$ and
$f$ is non-increasing; it is strictly decreasing on $(0,1]$ whenever the $z_i$ are
not all equal, since then $\Var_{p_a}(z)>0$ for finite $a$. Because
$f(0)=G\ge\kappa G$, the feasible set $\{a\in[0,1]:f(a)\ge\kappa G\}$ is non-empty;
by continuity it is closed, and by monotonicity it is an interval $[0,\alpha_m^\star]$.
The stated cases follow: (i) if $f(1)\ge\kappa G$ then $\alpha_m^\star=1$; (ii) if
$\kappa<1$ and $f(1)<\kappa G$, the intermediate value theorem yields a unique root
of $f(\alpha)=\kappa G$ in $(0,1)$; (iii) if $\kappa=1$ with non-constant ratios,
$f(\alpha)<G$ for all $\alpha>0$, forcing $\alpha_m^\star=0$; (iv) if all ratios are
equal, $f\equiv G$ and $\alpha_m^\star=1$. Bisection halves the bracket at each
iteration.
\end{proof}

\begin{theorem}[ESS-controlled dispersion and weighted-gradient norm]
\label{thm:variance}
For each group $m$ the self-normalised weights $\tilde w_{m,i}(\alpha_m^\star)$
(empirical mean $1$) satisfy the deterministic dispersion bound
\begin{equation}
\frac{1}{G}\sum_{i=1}^{G}\big(\tilde w_{m,i}(\alpha_m^\star)-1\big)^2
= \frac{G}{\ESS_m(\alpha_m^\star)}-1 \le \frac{1}{\kappa}-1,
\end{equation}
independent of any $\pi_\theta$-to-$\mu$ divergence. Consequently, for arbitrary
vectors $g_{m,i}$, the weighted group mean
$\hat g_m\coloneqq\frac{1}{G}\sum_i\tilde w_{m,i}(\alpha_m^\star)g_{m,i}$ obeys
$\|\hat g_m\|^2\le\frac{1}{\kappa}\cdot\frac{1}{G}\sum_i\|g_{m,i}\|^2$, and the
minibatch mean $\hat g\coloneqq\frac{1}{M}\sum_m\hat g_m$ obeys
$\|\hat g\|^2\le\frac{1}{\kappa MG}\sum_{m,i}\|g_{m,i}\|^2$.
\end{theorem}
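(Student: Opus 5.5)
The plan is to prove the dispersion identity by direct algebra, read off the inequality from Theorem~\ref{thm:alpha-exist}, and derive both gradient bounds by Cauchy--Schwarz.

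\textbf{Dispersion identity.} Write $\alpha=\alpha_m^\star$ and $\bar w_i=\rho_{m,i}^{\alpha}$, so $\tilde w_i = G\bar w_i/\sum_j\bar w_j$. Because the empirical mean of the $\tilde w_i$ is $1$, the dispersion expands as
\[
\frac1G\sum_i(\tilde w_i-1)^2=\frac1G\sum_i\tilde w_i^2-1 .
\]
Substituting the definition of $\tilde w_i$ gives
\[
\frac1G\sum_i\tilde w_i^2=\frac{G\sum_i\bar w_i^2}{(\sum_i\bar w_i)^2}=\frac{G}{\ESS_m(\alpha)} .
\]
This proves the identity. Theorem~\ref{thm:alpha-exist} (equivalently Definition~\ref{def:alpha-star}) guarantees $\ESS_m(\alpha_m^\star)\ge\kappa G$, so $G/\ESS_m(\alpha_m^\star)\le 1/\kappa$, which gives the inequality. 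The bound involves only $\kappa$, so it is independent of any $\pi_\theta$-to-$\mu$ divergence.

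\textbf{Group gradient bound.} Apply the triangle inequality and then Cauchy--Schwarz:
\[
\|\hat g_m\|\le\frac1G\sum_i\tilde w_i\|g_{m,i}\|\le\frac1G\Big(\sum_i\tilde w_i^2\Big)^{1/2}\Big(\sum_i\|g_{m,i}\|^2\Big)^{1/2}.
\]
Squaring gives
\[
\|\hat g_m\|^2\le\Big(\frac1G\sum_i\tilde w_i^2\Big)\frac1G\sum_i\|g_{m,i}\|^2 .
\]
By the identity above, the first factor equals $G/\ESS_m(\alpha_m^\star)\le1/\kappa$, which yields the group bound.

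\textbf{Minibatch bound.} Convexity of $\|\cdot\|^2$ (Jensen over the $M$ groups) gives $\|\hat g\|^2\le\frac1M\sum_m\|\hat g_m\|^2$. Inserting the group bound gives
\[
\|\hat g\|^2\le\frac1M\sum_m\frac{1}{\kappa G}\sum_i\|g_{m,i}\|^2=\frac{1}{\kappa MG}\sum_{m,i}\|g_{m,i}\|^2 .
\]

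There is no real obstacle here. The only points needing care are keeping the $1/G$ normalisations consistent in the Cauchy--Schwarz step, and noting that feasibility of $\alpha_m^\star$ relies on Theorem~\ref{thm:alpha-exist}, which ensures the maximum is attained (closed feasible set), so $\ESS_m(\alpha_m^\star)\ge\kappa G$ holds exactly rather than only in the limit.
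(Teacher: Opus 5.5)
Your proposal is correct and follows the paper's proof essentially step for step: the mean-one expansion yielding $G/\ESS_m(\alpha_m^\star)-1$, the feasibility bound $\ESS_m(\alpha_m^\star)\ge\kappa G$, Cauchy--Schwarz for the group bound, and Jensen over the $M$ groups for the minibatch bound. The only differences are cosmetic: the paper gets $\frac1G\sum_i\tilde w_i^2=G/\ESS_m$ from the scale-invariance of ESS rather than by direct substitution, and it applies Cauchy--Schwarz to the vector sum without your explicit triangle-inequality step.
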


\begin{proof}[Proof of Theorem~\ref{thm:variance}]
Fix a group $m$ and abbreviate $\tilde w_i\coloneqq\tilde w_{m,i}(\alpha_m^\star)$.
Self-normalisation gives $\frac1G\sum_i\tilde w_i=1$, i.e.\ $\sum_i\tilde w_i=G$.
As ESS is invariant to a common positive rescaling of the weights,
\begin{equation}
\ESS_m(\alpha_m^\star)=\frac{\big(\sum_i\tilde w_i\big)^2}{\sum_i\tilde w_i^2}=\frac{G^2}{\sum_i\tilde w_i^2}
\qquad\Longrightarrow\qquad
\frac1G\sum_{i=1}^{G}\tilde w_i^2=\frac{G}{\ESS_m(\alpha_m^\star)} .
\end{equation}
Therefore, using $\frac1G\sum_i\tilde w_i=1$ and $\ESS_m(\alpha_m^\star)\ge\kappa G$,
\begin{equation}
\frac1G\sum_{i=1}^{G}\big(\tilde w_i-1\big)^2
=\frac1G\sum_i\tilde w_i^2-1
=\frac{G}{\ESS_m(\alpha_m^\star)}-1
\le\frac1\kappa-1 .
\end{equation}
For the gradient bound, the Cauchy--Schwarz inequality gives
\begin{equation}
\Big\|\frac1G\sum_{i=1}^{G}\tilde w_i\,g_i\Big\|^2
\le\Big(\frac1G\sum_i\tilde w_i^2\Big)\Big(\frac1G\sum_i\|g_i\|^2\Big)
\le\frac1\kappa\cdot\frac1G\sum_{i=1}^{G}\|g_i\|^2 .
\end{equation}
Averaging over groups and applying Jensen's inequality,
\begin{equation}
\|\hat g\|^2=\Big\|\frac1M\sum_{m=1}^{M}\hat g_m\Big\|^2
\le\frac1M\sum_{m=1}^{M}\|\hat g_m\|^2
\le\frac{1}{\kappa MG}\sum_{m=1}^{M}\sum_{i=1}^{G}\|g_{m,i}\|^2 .
\end{equation}
\end{proof}

\begin{theorem}[Fixed-$\alpha$ bridge-target discrepancy]
\label{thm:bias}
Fix a prompt $x$ and $\alpha \in [0,1]$ with $\tilde\pi_\alpha(\cdot\mid x)\ll\pi_\theta(\cdot\mid x)$,
and suppose $|R| \le R_{\max}$. Then
\begin{equation}
\big|\mathbb E_{\tilde\pi_\alpha(\cdot\mid x)}[R] - \mathbb E_{\pi_\theta(\cdot\mid x)}[R]\big|
\le R_{\max}\,\sqrt{\chi^2\!\big(\tilde\pi_\alpha(\cdot\mid x)\,\|\,\pi_\theta(\cdot\mid x)\big)},
\end{equation}
where $\chi^2(p\|q) = \mathbb E_q[(p/q-1)^2]$ is the Pearson divergence.
\end{theorem}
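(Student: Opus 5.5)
We rewrite the difference of expectations as a single expectation under $\pi_\theta$ and then apply Cauchy--Schwarz. Write $L(y)\coloneqq\tilde\pi_\alpha(y\mid x)/\pi_\theta(y\mid x)$. This likelihood ratio is well defined because $\tilde\pi_\alpha(\cdot\mid x)\ll\pi_\theta(\cdot\mid x)$. On the common support $\mathcal Y_x$ of Theorem~\ref{thm:bridge} it equals $\rho(y\mid x)^{\alpha-1}/Z_\alpha(x)$, although the explicit form is not needed.

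The first step is the change of measure
\begin{equation}
\mathbb E_{\tilde\pi_\alpha}[R]=\mathbb E_{\pi_\theta}[L\,R].
\end{equation}
Since $\mathbb E_{\pi_\theta}[L]=1$, it follows that
\begin{equation}
\mathbb E_{\tilde\pi_\alpha}[R]-\mathbb E_{\pi_\theta}[R]=\mathbb E_{\pi_\theta}\big[(L-1)\,R\big].
\end{equation}
Here $\mathbb E_{\pi_\theta}[L]=1$ holds because $\tilde\pi_\alpha$ is a normalised distribution whose mass lies entirely in the support of $\pi_\theta$.

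The second step applies the Cauchy--Schwarz inequality in $L^2(\pi_\theta)$:
\begin{equation}
\big|\mathbb E_{\pi_\theta}[(L-1)R]\big|\le\sqrt{\mathbb E_{\pi_\theta}[(L-1)^2]}\,\sqrt{\mathbb E_{\pi_\theta}[R^2]}.
\end{equation}
The first factor is exactly $\sqrt{\chi^2(\tilde\pi_\alpha\|\pi_\theta)}$ by the stated definition of the Pearson divergence. The second factor is at most $R_{\max}$ because $|R|\le R_{\max}$. Combining the two factors gives the claimed bound.

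If the $\chi^2$ divergence is infinite, the bound holds trivially. A sharper variant could replace $R$ by $R-c$ for any constant $c$, which is permitted because $\mathbb E_{\pi_\theta}[L-1]=0$. Taking $c=\mathbb E_{\pi_\theta}[R]$ would yield a standard-deviation factor in place of $R_{\max}$. This refinement is unnecessary for the statement.

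The only delicate point is integrability. Since $R$ is bounded, $LR$ is $\pi_\theta$-integrable whenever $L$ is, and $L$ is integrable because its integral under $\pi_\theta$ is $1$. The change of measure is therefore justified. When $\chi^2<\infty$, the Cauchy--Schwarz step is legitimate because $L-1\in L^2(\pi_\theta)$ and $R\in L^\infty\subset L^2$. I do not anticipate any real obstacle.
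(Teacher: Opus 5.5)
Your proof is correct and matches the paper's argument essentially step for step: define $r_\alpha=\tilde\pi_\alpha/\pi_\theta$, use $\mathbb E_{\pi_\theta}[r_\alpha]=1$ to rewrite the difference as $\mathbb E_{\pi_\theta}[(r_\alpha-1)R]$, then apply Cauchy--Schwarz with $\mathbb E_{\pi_\theta}[R^2]\le R_{\max}^2$. Your extra remarks on integrability, the infinite-$\chi^2$ case, and the centred (standard-deviation) refinement are valid but not needed for the statement.
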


\begin{proof}[Proof of Theorem~\ref{thm:bias}]
Let $r_\alpha\coloneqq\tilde\pi_\alpha(\cdot\mid x)/\pi_\theta(\cdot\mid x)$, which is
well defined because $\tilde\pi_\alpha\ll\pi_\theta$, and note
$\mathbb E_{\pi_\theta}[r_\alpha]=1$. Then
\begin{equation}
\mathbb E_{\tilde\pi_\alpha}[R]-\mathbb E_{\pi_\theta}[R]
=\mathbb E_{\pi_\theta}[r_\alpha R]-\mathbb E_{\pi_\theta}[R]
=\mathbb E_{\pi_\theta}\!\big[(r_\alpha-1)R\big] .
\end{equation}
By the Cauchy--Schwarz inequality and $|R|\le R_{\max}$,
\begin{equation}
\big|\mathbb E_{\pi_\theta}[(r_\alpha-1)R]\big|
\le\big(\mathbb E_{\pi_\theta}[(r_\alpha-1)^2]\big)^{1/2}\big(\mathbb E_{\pi_\theta}[R^2]\big)^{1/2}
\le R_{\max}\sqrt{\chi^2\!\big(\tilde\pi_\alpha\,\|\,\pi_\theta\big)},
\end{equation}
where the last step uses $\mathbb E_{\pi_\theta}[(r_\alpha-1)^2]=\chi^2(\tilde\pi_\alpha\|\pi_\theta)$.
The bound concerns the population objective $\mathbb E_{\tilde\pi_\alpha}[R]$ only; it
does not assert unbiasedness of the finite-sample self-normalised estimator, nor
account for the data-adaptive selection of $\alpha_m^\star$.
\end{proof}

\begin{gprop}[Monotonicity of the bridge discrepancy]
\label{prop:disc-mono}
Fix a prompt $x$ with $\mu(\cdot\mid x)$ and $\pi_\theta(\cdot\mid x)$ mutually
absolutely continuous. Let $X\coloneqq\log(\mu(Y\mid x)/\pi_\theta(Y\mid x))$ with
$Y\sim\pi_\theta(\cdot\mid x)$, and suppose $M(t)\coloneqq\mathbb E_{\pi_\theta}[e^{tX}]<\infty$
for $t\in[0,2]$. Then $\alpha\mapsto\chi^2(\tilde\pi_\alpha\|\pi_\theta)$ is
non-increasing on $[0,1]$, and strictly decreasing on $[0,1)$ unless $\mu=\pi_\theta$
almost surely.
\end{gprop}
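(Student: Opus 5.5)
The plan is to reduce the claim to a convexity statement about the cumulant generating function $K(t)\coloneqq\log M(t)$ of $X$, in the same way that Theorem~\ref{thm:alpha-exist} used the tilted law $p_a$ for the finite-sample ESS.

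\textbf{Step 1: closed form for the divergence.} Write $t\coloneqq 1-\alpha\in[0,1]$. By the definition in Theorem~\ref{thm:bridge},
\[
\frac{\tilde\pi_\alpha}{\pi_\theta}=\frac{(\mu/\pi_\theta)^{1-\alpha}}{Z_\alpha}=\frac{e^{tX}}{Z_\alpha}.
\]
A change of measure gives $Z_\alpha=\mathbb E_\mu[\rho^\alpha]=\mathbb E_{\pi_\theta}[\rho^{\alpha-1}]=M(t)$. Hence
\[
\chi^2(\tilde\pi_\alpha\|\pi_\theta)=\mathbb E_{\pi_\theta}\big[(e^{tX}/M(t))^2\big]-1=\frac{M(2t)}{M(t)^2}-1 .
\]
This uses $M(2t)<\infty$ for $2t\in[0,2]$, which is exactly the hypothesis. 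The claim is therefore equivalent to showing that
\[
\phi(t)\coloneqq K(2t)-2K(t)
\]
is non-decreasing on $[0,1]$, and strictly increasing on $(0,1]$ unless $X$ is degenerate. Note that $\alpha\in[0,1)$ corresponds to $t\in(0,1]$.

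\textbf{Step 2: monotonicity via convexity of $K$.} Since $M$ is finite on $[0,2]$, it is finite and smooth on the open interval $(0,2)$, with derivatives obtained by differentiating under the integral. There $K'(s)=\mathbb E_{q_s}[X]$ and $K''(s)=\Var_{q_s}(X)\ge 0$, where $q_s\propto e^{sX}\pi_\theta$ is the exponentially tilted law. For $t\in(0,1)$ we have $0<t<2t<2$, so
\[
\phi'(t)=2\big(K'(2t)-K'(t)\big)\ge 0,
\]
because $K'$ is non-decreasing. At the endpoints $t=0$ and $t=1$ I will only need continuity of $\phi$. This follows from dominated convergence: $e^{sX}\le 1+e^{2X}$ for $s\in[0,2]$, and $M(2)<\infty$. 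Continuity together with $\phi'\ge 0$ on the interior gives monotonicity on all of $[0,1]$.

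\textbf{Step 3: strictness.} If $X$ is non-degenerate under $\pi_\theta$, then $\Var_{q_s}(X)>0$ for every $s$, because the tilted laws are equivalent to $\pi_\theta$. So $K'$ is strictly increasing, and $\phi'(t)>0$ for all $t\in(0,1)$. Hence $\phi$ is strictly increasing on $[0,1]$, and in particular $\alpha\mapsto\chi^2$ is strictly decreasing on $[0,1)$ (indeed on $[0,1]$).

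If instead $X\equiv c$ almost surely, then $\mathbb E_{\pi_\theta}[e^X]=\mathbb E_\mu[1]=1$ forces $c=0$, so $\mu=\pi_\theta$ almost surely. In that case $\chi^2\equiv 0$, which is the stated exception.

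\textbf{Main obstacle.} The only delicate point is regularity at the boundary $2t=2$, that is $\alpha=0$. The moment condition is assumed only on the closed interval $[0,2]$, so $K$ need not be differentiable at $s=2$. I will avoid differentiating there and use only the interior derivative plus continuity, as described in Step 2.
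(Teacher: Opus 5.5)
Your proposal is correct and follows the paper's proof essentially step for step: with $t=1-\alpha$ you obtain $1+\chi^2(\tilde\pi_\alpha\|\pi_\theta)=M(2t)/M(t)^2=\exp(K(2t)-2K(t))$, and you conclude from $2K'(2t)-2K'(t)\ge 0$, i.e.\ from convexity of the cumulant generating function. You also add two things the paper passes over: you differentiate only on the interior $(0,2)$ and use dominated convergence for continuity at the endpoints, which handles possible non-differentiability of $K$ at $s=0$ or $s=2$; and you show a degenerate $X$ must satisfy $X\equiv 0$ because $\mathbb E_{\pi_\theta}[e^{X}]=1$, which identifies that case with $\mu=\pi_\theta$.
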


\begin{proof}
Set $t\coloneqq1-\alpha\in[0,1]$ and $K(t)\coloneqq\log M(t)$. The bridge density
relative to $\pi_\theta$ is
\begin{equation}
\frac{\tilde\pi_\alpha(Y\mid x)}{\pi_\theta(Y\mid x)}=\frac{e^{tX(Y)}}{M(t)},
\end{equation}
so
\begin{equation}
1+\chi^2\!\big(\tilde\pi_\alpha\,\|\,\pi_\theta\big)
=\mathbb E_{\pi_\theta}\!\Big[\Big(\frac{e^{tX}}{M(t)}\Big)^{\!2}\Big]
=\frac{M(2t)}{M(t)^2}
=\exp\!\big(K(2t)-2K(t)\big) .
\end{equation}
Differentiating,
\begin{equation}
\frac{d}{dt}\log\!\big(1+\chi^2(\tilde\pi_\alpha\|\pi_\theta)\big)
=2K'(2t)-2K'(t)\ge 0,
\end{equation}
because $K$ is a cumulant generating function and hence convex, so $K'$ is
non-decreasing. Thus $\chi^2$ is non-decreasing in $t=1-\alpha$, i.e.\
non-increasing in $\alpha$, and strictly so when $X$ is non-degenerate.
\end{proof}

\begin{gprop}[No hard masking by power tempering]
\label{prop:nocollapse}
For every $\alpha \in [0,1]$ and strictly positive $\rho_1,\dots,\rho_G$, the
normalised tempered weights satisfy $\tilde w_{m,i}(\alpha) > 0$. Thus power
tempering never assigns an exactly zero coefficient to a sampled trajectory.
(Strict positivity does not by itself imply a nonzero final PPO gradient: a
sample still contributes zero when its advantage is zero or the clip is active.)
\end{gprop}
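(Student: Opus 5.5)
The argument is a direct positivity check that follows the two steps of Definition~\ref{def:tempered}. First, fix $\alpha\in[0,1]$ and a group $m$. Since each $\rho_{m,i}>0$, write $\rho_{m,i}^{\alpha}=\exp(\alpha\log\rho_{m,i})$. The logarithm is finite and the exponential is strictly positive on $\mathbb R$, so $\bar w_{m,i}(\alpha)>0$ for every $i$. This also covers $\alpha=0$, where $\bar w_{m,i}(0)=1$, with no $0^0$ ambiguity because the base is strictly positive.

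Second, the normaliser $\frac1G\sum_{j=1}^G\bar w_{m,j}(\alpha)$ is an average of $G\ge1$ strictly positive reals. It is therefore strictly positive and finite, so the quotient defining $\tilde w_{m,i}(\alpha)$ is well defined and is a ratio of two positive numbers. Hence $\tilde w_{m,i}(\alpha)>0$. Since this holds for every $\alpha\in[0,1]$, it holds in particular at the data-dependent exponent $\alpha_m^\star$ of Definition~\ref{def:alpha-star}, whose existence is guaranteed by Theorem~\ref{thm:alpha-exist}. So no sampled trajectory ever receives an exactly zero coefficient.

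For the parenthetical remark, I would examine the gradient of a single summand of \eqref{eq:tetpo-loss}. When the unclipped branch is selected, it equals $\tilde w_{m,i}(\alpha_m^\star)A_{m,i}\nabla_\theta\ell_{m,i}(\theta)$, which vanishes if $A_{m,i}=0$. When the clipped branch attains the minimum, the surrogate is a constant in $\theta$ and has zero gradient. So positivity of the weight does not by itself give a nonzero contribution.

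There is no real obstacle here. The only points needing care are the well-definedness of $\rho^\alpha$ at $\alpha=0$ and the nonvanishing of the denominator, and both follow from strict positivity of the ratios. One might add that, in floating point, computing $\rho^\alpha$ in log space with a log-sum-exp normaliser avoids underflow to zero. That is an implementation remark, not part of the proof.
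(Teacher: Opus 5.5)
Your proof is correct and follows the paper's argument: each $\rho_{m,i}^{\alpha}$ is strictly positive, so the group average in the denominator is positive and hence $\tilde w_{m,i}(\alpha)>0$ (the paper treats $\alpha>0$ and $\alpha=0$ separately, and your $\exp(\alpha\log\rho_{m,i})$ form covers both at once). Your branch-by-branch check of the parenthetical caveat goes beyond the paper, which states it without proof; note only that the clipped branch has zero gradient when the clip is actually active, i.e.\ when the forward ratio lies outside $[1-\varepsilon_{\text{lo}},1+\varepsilon_{\text{hi}}]$, not merely when that branch attains the minimum.
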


\begin{proof}[Proof of Proposition~\ref{prop:nocollapse}]
Fix $\alpha\in[0,1]$. If $\alpha>0$, then $\rho_{m,i}>0$ implies
$\rho_{m,i}^\alpha>0$; if $\alpha=0$, then $\rho_{m,i}^0=1>0$. In either case the
group denominator $\frac1G\sum_j\rho_{m,j}^\alpha>0$, so
\begin{equation}
\tilde w_{m,i}(\alpha)=\frac{\rho_{m,i}^\alpha}{\frac1G\sum_{j=1}^{G}\rho_{m,j}^\alpha}>0 .
\end{equation}
\end{proof}

\section{Experimental Details and Reproducibility}
\label{app:repro}
\sloppy

\paragraph{Models and data.} We post-train \textit{Qwen3-Base-1.7B} (and, in the
second block, \textit{Llama-3.2-3B-Instruct}) with RL. Training prompts are the
SimpleRL-Zoo competition-math split \texttt{SimpleLR/simplelr\_qwen\_level3to5}
(difficulty levels 3--5), loaded from the released \texttt{train.parquet} and
filtered to prompts of at most $768$ tokens; we use the entire filtered split
without subsampling. Evaluation uses five \emph{separate} public benchmarks:
MATH-500 (\texttt{HuggingFaceH4/MATH-500}), AMC23 (\texttt{HanlardResearch/amc23}),
AMC24 (\texttt{rawsh/2024\_AMC12}), AIME24 (\texttt{HanlardResearch/aime\_2024}),
and AIME25 (\texttt{HanlardResearch/aime\_2025}), which come from sources
disjoint from the training corpus, so evaluation items are excluded from training
by construction; we did not run an additional $n$-gram decontamination pass. The
reward is binary correctness computed by \texttt{math-verify} (\texttt{v0.5.2}):
the gold answer is wrapped in \texttt{\textbackslash boxed\{\}} and parsed, the
completion is parsed with boxed-answer priority, and the reward is
$\mathbf 1\{\texttt{verify(answer, gold)}\}\in\{0,1\}$, with no learned reward
model. Rewards are not scaled (\texttt{scale\_rewards=False}); advantages are the
within-group mean-subtracted rewards (GRPO-style).

\paragraph{Optimisation hyperparameters.} All methods share: optimiser AdamW,
learning rate $1{\times}10^{-6}$ with cosine schedule and warmup ratio $0.05$,
gradient clipping $1.0$, KL coefficient $\beta=0$, and $3$ epochs with
evaluation/checkpointing every $64$ steps. Rollouts use $G=8$ completions per
prompt with $M=64$ prompts per optimisation step (effective batch $B=MG=512$
sequences; realised as per-device batch $8$ $\times$ gradient accumulation $8$
across $4$ GPUs), $\max$ prompt length $768$ and $\max$ completion length $4096$.
Decoding for rollouts uses temperature $0.6$, top-$p$ $0.95$, top-$k$ $20$,
$\min$-$p$ $0$. The PPO clip bounds are $\varepsilon_{\text{lo}}=\varepsilon_{\text{hi}}=0.2$.
For TETPO we use target ESS fraction $\kappa=0.85$, floor $\alpha_{\min}=0$, and a
$24$-iteration bisection over $[0,1]$ (${\sim}10^{-7}$ precision).

\paragraph{Staleness protocol.} We use a PipelineRL-style schedule
(\texttt{uniform\_stale\_k}): the sampler serves from a pool of recent learner
snapshots, and each rollout's policy lag is drawn independently as
$\ell\sim\mathrm{Uniform}\{1,\dots,k\}$ with $k=32$. Thus $k$ is the
\emph{maximum} lag and the realised lag is random (not fixed or averaged); the
snapshot pool holds up to $256$ checkpoints. This produces the heterogeneous
staleness that TETPO's per-group adaptation targets.

\paragraph{Importance ratios under truncated decoding.}
Our rollout sampler applies temperature scaling followed by hard top-$k$/top-$p$
truncation. Let $\mathcal S_\mu(h)$ denote the token support retained by this
decoding rule at history $h$. The resulting behaviour distribution is
\[
\mu^{\mathcal S}(a\mid h)
=
\frac{
\mu^{\mathrm{raw}}(a\mid h)
\mathbf 1\{a\in\mathcal S_\mu(h)\}
}{
\sum_{a'\in\mathcal S_\mu(h)}
\mu^{\mathrm{raw}}(a'\mid h)
},
\]
where $\mu^{\mathrm{raw}}$ denotes the temperature-scaled distribution before
top-$k$/top-$p$ truncation. Because $\mu^{\mathcal S}(a\mid h)=0$ outside
$\mathcal S_\mu(h)$ while the learner generally assigns positive probability
there, trajectories generated by $\mu^{\mathcal S}$ cannot provide an exact
importance-sampling estimator for the unrestricted learner policy. An exact
change-of-measure identity is obtained instead for the learner restricted to
the same behaviour support,
\[
\pi_\theta^{\mathcal S}(a\mid h)
=
\frac{
\pi_\theta^{\mathrm{raw}}(a\mid h)
\mathbf 1\{a\in\mathcal S_\mu(h)\}
}{
\sum_{a'\in\mathcal S_\mu(h)}
\pi_\theta^{\mathrm{raw}}(a'\mid h)
},
\]
for which
\[
\rho^{\mathcal S}(y\mid x)
=
\prod_t
\frac{
\pi_\theta^{\mathcal S}(y_t\mid x,y_{<t})
}{
\mu^{\mathcal S}(y_t\mid x,y_{<t})
}
\]
satisfies $\pi_\theta^{\mathcal S}\ll\mu^{\mathcal S}$ and
$\mathbb E_{y\sim\mu^{\mathcal S}}[\rho^{\mathcal S}(y\mid x)]=1$.

In the experiments reported here, however, we do not retain
$\mathcal S_\mu(h)$ or the corresponding truncation normalisers. We record the
log-probability of the sampled token under the pre-truncation,
temperature-scaled behaviour distribution and form the sequence-level proxy
\[
\widehat\rho(y\mid x)
=
\prod_t
\frac{
\pi_\theta^{\mathrm{raw}}(y_t\mid x,y_{<t})
}{
\mu^{\mathrm{raw}}(y_t\mid x,y_{<t})
}.
\]
Consequently, whenever top-$k$/top-$p$ truncation is active,
$\widehat\rho$ is not the exact importance ratio for either the unrestricted
learner policy or the support-restricted policy
$\pi_\theta^{\mathcal S}$. TETPO's empirical ESS and the selected exponent
$\alpha^\star$ are therefore computed from $\widehat\rho$ and should be viewed
as using an approximate likelihood-ratio proxy.

This distinction affects the interpretation of our theoretical results.
The ESS monotonicity, target-ESS feasibility, deterministic weight-dispersion
bound, and positivity results (Theorems~2--3 and Proposition~2) hold for any
strictly positive input ratios and therefore continue to hold exactly when
TETPO is applied to $\widehat\rho$. In contrast, the exact geometric-bridge
identity and the associated bridge-target discrepancy results
(Theorems~1 and~4 and Proposition~1) apply to the exact
support-restricted ratio $\rho^{\mathcal S}$ and do not constitute guarantees
for the approximate ratios used in the present experiments. Logging the
retained support (or otherwise evaluating the post-truncation normalisers)
would permit exact computation of $\rho^{\mathcal S}$; we leave this
implementation and a direct comparison between exact and approximate ratios
to future work.
\paragraph{Evaluation and overhead.} Reported numbers are avg@16 accuracy (\%) on
the five benchmarks above: for each prompt we draw $16$ independent samples with
stochastic decoding (temperature $0.6$, top-$p$ $0.95$, top-$k$ $20$, matching the
rollout settings) and average the binary correctness over those samples. We use
avg@16 rather than single-sample greedy accuracy because several of these
benchmarks are small (AIME24/AIME25 contain $30$ problems each), so a single
greedy decode per prompt gives a very coarse, high-variance estimate. Each
configuration is run with three training seeds, and
Table~\ref{tab:performance-compare} reports mean\,$\pm$\,standard deviation over
those seeds; per-seed results accompany the code. The only computation TETPO
adds to a GRPO step is the per-group $24$-iteration bisection on already-computed
log-ratios ($O(n_{\text{iter}}\,MG)$ elementwise operations) and the groupwise
self-normalisation; these introduce no additional stored activations, and we
observed no measurable change in per-step wall-clock relative to GRPO.

\section{Additional Related Work}
\label{app:related}

\paragraph{RL post-training and LLM agents.}
TETPO targets the off-policy correction that arises in asynchronous RL
post-training of LLMs~\citep{noukhovitch2024arxiv,fu2025areal,wu2025llamarl,piche2025pipelinerl},
where group-relative objectives such as GRPO~\citep{grpo} and its sequence-level
variants~\citep{gspo} are trained on stale rollouts. Beyond single-turn reasoning,
RL is increasingly used to train multi-step LLM agents, where unstable updates are
a central difficulty: hierarchical RL with stable temporal
abstraction~\citep{hamidi2026hierarchical} stabilizes agent training by
structuring decisions over temporally extended actions, and offline imagination
with uncertainty-guided planning~\citep{hamidi2026dreamphase} reduces reliance on
costly online interaction by planning over imagined trajectories. Both settings
reuse data generated under a policy other than the current learner, so the
weight-degeneracy problem studied here also arises there, and TETPO's group-wise
ESS control is complementary to these designs. A related line of work studies
preference-based post-training and distillation, e.g.\ how the choice of models
used to generate rejected responses affects preference distillation at
scale~\citep{cai2026smaller}; such pipelines likewise train on samples drawn from
a distribution that differs from the learner's, which motivates
distribution-mismatch corrections of the kind TETPO provides.

\end{document}